\documentclass{article}

\usepackage[preprint]{main}
\usepackage{tcolorbox}

\usepackage[utf8]{inputenc} 
\usepackage[T1]{fontenc}    
\usepackage{hyperref}       
\usepackage{url}            
\usepackage{booktabs}       
\usepackage{amsfonts}       
\usepackage{nicefrac}       
\usepackage{microtype}      
\usepackage{xcolor}         
\usepackage{amsmath}
\usepackage{amsthm}
\usepackage{algorithm}
\usepackage{algorithmic}
\newtheorem{lemma}{Lemma}
\newtheorem{theorem}{Theorem}

\theoremstyle{definition}
\newtheorem{assumption}[theorem]{Assumption}
\theoremstyle{plain}
\usepackage{booktabs}
\usepackage{multirow}
\usepackage{makecell}
\usepackage{amssymb}
\usepackage{listings}
\usepackage{caption}

\title{Interactive Critique-Revision Training for Reliable Structured LLM Generation}

\author{%
    Fei Xu Yu\\
    The George Washington University\\
    \texttt{fxyu@gwu.edu}\\
    \And
    Zuyuan Zhang\\
    The George Washington University\\
    \texttt{zuyuan.zhang@gwu.edu}\\
    \And
    Mahdi Imani\\
    Northeastern University\\
    \texttt{m.imani@northeastern.edu}\\
    \And
    Nathaniel D. Bastian\\
    United States Military Academy\\
    \texttt{nathaniel.bastian@westpoint.edu}\\
    \And
    Tian Lan\\
    The George Washington University\\
    \texttt{tlan@gwu.edu}
}

\begin{document}

\maketitle

\begin{abstract}
In structured decision-making workflows such as form filling, compliance checking, and maintenance reporting, LLM outputs must be locally correct, globally consistent, and auditable against task-specific rules. Existing refinement methods often rely on heuristic debate, self-play, or LLM-generated supervision, creating a second-order assurance problem. We propose \textbf{DPA-GRPO} (Dual Paired-Action Group-Relative Policy Optimization), a paired-action training method for a two-player generator--verifier game with structured verifier interventions. The generator proposes outputs and may revise them when challenged; the verifier either remains silent or raises a safety assurance case (SAC) containing a claim, argument, and evidence. These SAC/no-SAC and KEEP/REVISE decisions induce paired counterfactual action groups, which DPA-GRPO uses for role-specific KL-regularized GRPO updates. We analyze the unregularized game and show that positive probability on strictly lower-reward intervention or revision actions creates a profitable unilateral deviation. Under standard stochastic-approximation assumptions, DPA-GRPO tracks the corresponding game ODE, whose isolated asymptotically stable limit points are stationary and candidate local equilibria under role-wise local optimality. Experiments on TaxCalcBench TY24 show that DPA-GRPO improves structured decision accuracy over zero-shot generation and generator-only RL baselines across Qwen3-4B and Qwen3-8B. Training increases correct silent acceptance, reduces missed errors, and improves calibrated revision behavior, indicating gains for both generator and verifier.

\end{abstract}
\section{Introduction}

Large language models (LLMs) have achieved strong performance on open-ended generation, reasoning, and tool-use tasks~\citep{brown2020language,achiam2023gpt,wei2022chain,yao2022react,schick2023toolformer}, yet their use in complex, structured decision-making workflows remains brittle. 
In domains such as form filling, compliance checking, maintenance reporting, and program-like reasoning, LLM outputs must be plausible, locally correct, globally consistent, and auditable against required rules and standards~\citep{ji2023survey,huang2025survey,cobbe2021training,lightman2023let}. Addressing this mismatch is crucial for high-stakes structured workflows.

Recent progress highlights useful ingredients for this problem and leaves open how to train verifier intervention and generator revision policies around structured assurance evidence. First, using LLMs to directly verify or supervise other LLMs creates a second-order verification problem in which the verifier itself requires assessment. A verifier rewarded for finding errors may produce persuasive objections with weak grounding \citep{ji2023survey,huang2025survey,bai2022constitutional,madaan2023self,shinn2023reflexion}. A generator that follows verifier feedback uncritically may over-correct accurate outputs. A generator that discounts useful feedback may preserve unsafe errors. Second, multi-agent debate improves reasoning by allowing model instances to challenge one another. Existing implementations often use debate as a prompting or test-time scaling mechanism rather than as a role-specific training paradigm \citep{du2024improving,khan2024debating}, without a systematic framework to co-train or co-adapt all agents in concert \citep{cobbe2021training,lightman2023let,lifshitz2025multi,jung2026coevolvingagentslearningfailures,yu2025optimizing}. Finally, recent work on self-play and competitive evaluation methods shows promise for learning from game outcomes and inter-model competition, and leave open safety-case-style assurance with respect to specific requirements and auditability~\citep{kuba2025languageselfplaydatafreetraining,alyahya2025zerosumeval,liu2025spiral, liao2024efficacylanguagemodelselfplay}. The challenges to get AI agents to agree are highlighted in~\citep{berdoz2026aiagentsagree}.

We propose \textbf{DPA-GRPO} (Dual Paired-Action Group-Relative Policy Optimization), a game-theoretic solution for safety-case-based assurance of structured LLM outputs. Game theory provides a mathematical framework for analyzing strategic interaction between learning agents, and a well-designed equilibrium pushes them toward desired collective behaviors. We train a two-player Markov game \citep{shapley1953stochastic} between a \emph{generator} and a \emph{verifier}. The generator proposes solutions (e.g., form filling or multi-step planning) and may revise them when challenged. The verifier observes the proposed solutions and chooses whether to remain silent or raise a safety assurance case (SAC) consisting of a claim, argument, and evidence. This idea is
closely related to a long tradition of safety certification. Safety-critical
engineering has long used \emph{safety cases}: structured
claim-argument-evidence records that justify why a system satisfies required
properties and is sufficiently safe for deployment
\citep{kelly2004goal,denney2015dynamic,dezfuli2015nasa,lincoln2012overview}. Such records are attractive for LLM validation because they make objections interpretable: the verifier states a claim, supports it with an argument, and grounds the argument in evidence. This structure provides a structured interface for verifier challenges and generator revisions. A more detailed discussion of related work is provided in Appendix~\ref{sec:relatedwork}.

The generator-verifier game turns structured assurance feedback into paired intervention and revision decisions, producing paired counterfactual training signals at each evaluated output. In particular, SAC/no-SAC decisions and keep/revise responses arising from the game produce novel positive/negative samples that do not exist in standard training regimes, such as SFT, DPO, or GRPO \citep{ouyang2022training,rafailov2023direct, shao2024deepseekmath}.
We analyze this game and categorize the counterfactual training signals into eight informative cases. Based on these signals, DPA-GRPO trains the generator and verifier in parallel. The resulting paired-action updates are related to recent interpretations of GRPO as a contrastive or preference-like objective
\citep{shao2024deepseekmath,wu2025takes}, but differ in that the preference pairs are induced by structured generator--verifier interactions rather than external preference labels. Both agents are trained with role-conditional policies and KL-regularized
objectives, allowing the system to improve through interaction while limiting
unstable policy drift.

Our theoretical analysis studies necessary conditions for local Nash equilibria in the two-player Markov game~\citep{nash1950equilibrium,shapley1953stochastic}. Desired behavior corresponds to a parameter configuration in which each agent is locally optimal for its own role-specific reward \citep{cobbe2021training,lightman2023let}. We show that under standard non-degeneracy assumptions, positive probability mass on lower-reward cases, such as missed violations or harmful unnecessary revisions, creates a profitable unilateral deviation in the unregularized game. With KL regularization, the same reward gap yields a soft best-response bound on lower-reward action probability. Under the stochastic-approximation assumptions, the interpolated DPA-GRPO iterates form an asymptotic pseudo-trajectory of the game ODE. Consequently, every isolated asymptotically stable limit point is a candidate local Nash equilibrium.

Our main contributions are as follows:
First, we formulate structured LLM validation as a two-player Markov game between a generator and a verifier, where verifier interventions are expressed as structured SACs containing claims, arguments, and evidence reasoning. Second, we propose DPA-GRPO, a dual paired-action training method that constructs counterfactual SAC/no-SAC and keep/revise comparisons to train both agents with role-conditional policies and KL-regularized updates. Third, we develop an interpretable case taxonomy and connect it to local Nash conditions and stochastic-approximation dynamics, providing a principled way to diagnose the training signals induced by generator--verifier interaction. Fourth, we evaluate DPA-GRPO on TaxCalcBench TY24 using structured decision accuracy and case-distribution diagnostics, showing improved performance over zero-shot generation and generator-only RL baselines.

\begin{figure}[t]
    \centering
    \includegraphics[width=\linewidth]{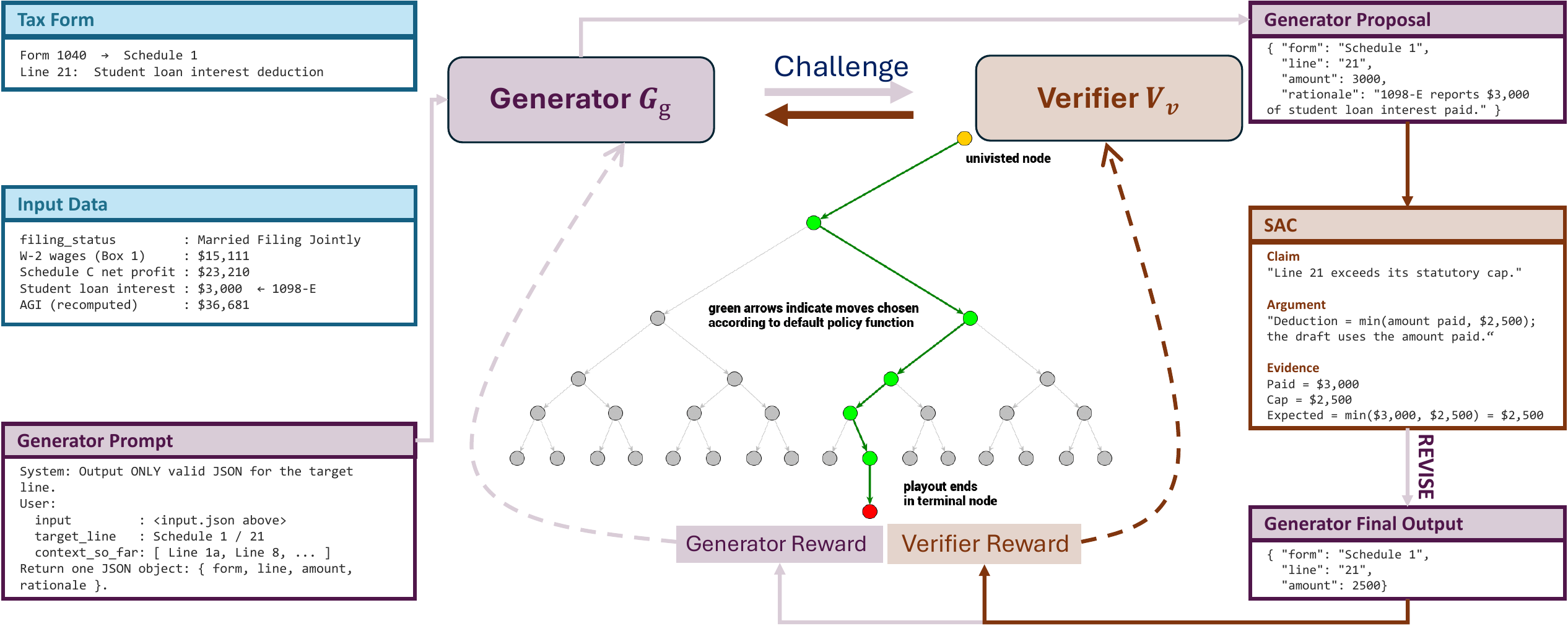}
    \caption{{\textbf{Illustration of the generator--verifier game in a tax-form completion task.}
    The left panel shows the application context for one evaluated decision unit, including the relevant form text, structured input data, and generator prompt. The generator first proposes an output value, after which the verifier may challenge the proposal by producing a structured safety assurance case (SAC) with a claim, argument, and evidence. The generator then decides whether to keep the original proposal or revise it. This interaction produces role-specific rewards for both agents, which DPA-GRPO uses to construct paired counterfactual training signals.}}
    
    \label{fig:case_histo}
\end{figure}

\section{A Generator--Verifier Markov Game}
\label{sec:game}

We model structured output generation as a finite-horizon two-player Markov game between a generator $G$ and a verifier $V$. A task instance $\tau\sim\mathcal{D}$ contains the structured input, task rules, output schema, and any dependencies required to complete the output. For each task $\tau$, the game unfolds over a finite horizon $t=1,\ldots, T_\tau$. At each time step $t$, the players complete one structured decision unit $u_t$. The state $m_t$ contains the task instance, the current decision unit, and the partial output generated so far:
$
m_t = (\tau,u_t,o_{<t}).
$

The two players have role-specific policies and action spaces. The generator first produces an initial proposal for the current decision unit,
$
x_t \sim \pi_G^x(\cdot \mid m_t).
$
The verifier observes the current state and proposal, $(m_t,x_t)$, and chooses an intervention action
$
y_t \sim \pi_V(\cdot \mid m_t,x_t), \text{where } y_t \in \{\mathrm{SAC},\mathrm{NS}\}.
$
Here, $\mathrm{SAC}$ denotes a structured safety assurance case and $\mathrm{NS}$ denotes no-SAC. An SAC may include a claim, supporting argument, evidence, and a suggested correction.

If the verifier chooses $\mathrm{NS}$, the proposal is submitted directly, so $o_t=x_t$. If the verifier chooses $\mathrm{SAC}$, the generator observes the verifier intervention and chooses how to respond. It may generate a candidate revision
$
z_t \sim \pi_G^z(\cdot \mid m_t,x_t,y_t)
$
and then select a revision action
$
a_t \sim \pi_G^a(\cdot \mid m_t,x_t,y_t,z_t), \text{where } a_t \in \{\mathrm{KEEP},\mathrm{REVISE}\}.
$
If $a_t=\mathrm{KEEP}$, the original proposal is submitted $(o_t=x_t)$; if $a_t=\mathrm{REVISE}$, the revised proposal is submitted $(o_t=z_t)$. The transition appends the submitted output component to the partial output and advances to the next decision unit:
$
m_{t+1} = T(m_t,x_t,y_t,a_t,o_t).
$
The episode terminates after all $T_\tau$ decision units have been completed.

This game is general-sum rather than zero-sum: both players are ultimately evaluated against the correctness of the structured output, but they receive role-specific rewards. The generator reward measures whether the submitted output component is correct and whether revision improves the initial proposal. The verifier reward measures whether the intervention decision is appropriate, rewarding useful SACs on incorrect proposals and penalizing missed errors or unnecessary interventions. Thus, the game defines not only a modeling abstraction for generator--verifier interaction, but also a source of training signals for both agents.

DPA-GRPO exploits this structure by constructing paired counterfactual action groups at each decision unit. For the verifier, the paired actions are $\{\mathrm{SAC},\mathrm{NS}\}$; for the generator, the paired actions are $\{\mathrm{KEEP},\mathrm{REVISE}\}$ when an intervention is raised. Because each decision unit can be checked against verifiable task-level labels or rules, these paired actions induce dense role-specific rewards. DPA-GRPO therefore turns each structured decision into training signals for both the verifier's intervention policy and the generator's revision policy.

\subsection{Game and SAC Taxonomy}
\label{sec:taxonomy}

We now define what is contained in a
SAC and how SAC outcomes are used to diagnose training behavior.

\paragraph{Safety Assurance Case (SAC).}
A structured safety assurance case (SAC) is the verifier's intervention message.
Its design follows safety-case methodology, where an assurance argument links a
claim to supporting evidence through an explicit reasoning step. NASA's safety-case-patterns report describes safety case patterns as reusable argument structures that connect types of
claims to available types of evidence and specify when the pattern applies
\citep{dezfuli2015nasa}. Related regulatory quality-system settings emphasize documented procedures, records, and controls as part of establishing and maintaining reliable processes \citep{ecfr21cfr820}. In our setting, an SAC functions as a structured verifier intervention that helps the generator decide whether the proposal should be revised.

Formally, when $y_t=\mathrm{SAC}$, the verifier emits
$
    \mathcal{SAC}_t=(c_t,g_t,e_t),
$
where $c_t$ is the \emph{claim} that the initial proposal is incorrect or requires review,
$g_t$ is the \emph{argument} explaining why the claim follows from the task context and
applicable rule, and $e_t$ is the \emph{evidence}, such as relevant input fields,
intermediate computations, or line dependencies. When $y_t=\mathrm{NS}$, the SAC field is empty. An example SAC consisting of claim, argument, and evidence for form filing is shown below.

\begin{tcolorbox}
\[
\begin{aligned}
c_t &: \text{``The initial proposal for line $t$ appears inconsistent with the input record.''}\\
g_t &: \text{``The line should include both components, but the initial proposal omits one of them.''}\\
e_t &: \text{``Component 1 = \$A, component 2 = \$B; expected value = \$A+\$B.''}
\end{aligned}
\]
\end{tcolorbox}
We separately evaluate SAC validity. Let $Q_{\mathrm{SAC}}(\mathcal{SAC}_t; m_t,x_t)\in[0,1]$ score whether the claim targets the correct line, the argument cites the applicable rule or dependency, and the evidence appears in the task context. 

\paragraph{Case taxonomy.}
We use case taxonomy to diagnose behavior and connect training dynamics to the
equilibrium conditions in Section~\ref{sec:main-guarantees}. Let
$S_x=\mathbf{1}\{x_t \text{ is strictly correct}\}$ and
$S_z=\mathbf{1}\{z_t \text{ is strictly correct}\}$. Each transition is classified by initial proposal
correctness $S_x$, verifier decision $y_t$, generator action $a_t$, and revision correctness
$S_z$:
\[
\begin{array}{c|cccc|l}
\toprule\toprule
\text{Case} & S_x & y_t & a_t & S_z & \text{Interpretation} \\
\hline
1 & 1 & \mathrm{NS}  & - & - & \text{correct initial proposal accepted} \\
2 & 1 & \mathrm{SAC} & \mathrm{REVISE} & - & \text{false-positive SAC, generator revises} \\
3 & 1 & \mathrm{SAC} & \mathrm{KEEP} & - & \text{false-positive SAC, generator keeps} \\
4 & 0 & \mathrm{NS}  & - & - & \text{missed initial proposal error} \\
5\mathrm{a} & 0 & \mathrm{SAC} & \mathrm{REVISE} & 1 & \text{successful repair} \\
5\mathrm{b} & 0 & \mathrm{SAC} & \mathrm{REVISE} & 0 & \text{failed repair accepted} \\
6\mathrm{a} & 0 & \mathrm{SAC} & \mathrm{KEEP} & 0 & \text{bad revision rejected} \\
6\mathrm{b} & 0 & \mathrm{SAC} & \mathrm{KEEP} & 1 & \text{good revision rejected}\\
\bottomrule
\end{array}
\]
Cases 1 and 5a represent the desired outcomes. Case 1 corresponds to a correct proposal accepted without
intervention, while Case 5a corresponds to a correct verifier intervention followed by a
successful repair. Case 4 indicates verifier under-intervention. Cases 2 and 3 measure verifier over-intervention and generator response quality. Cases 5b, 6a, and 6b diagnose revision-selection behavior: accepting a bad revision, rejecting a bad revision, and rejecting a good revision, respectively. Detailed generator--verifier interactions are provided in Appendix~\ref{app:case-interactions}.

\subsection{Paired Rewards and Regularized Objectives}
\label{sec:rewards}

During training, a reward oracle provides correctness labels. Let $S_x=\mathbf{1}\{x_t\text{ is strictly correct}\}$ and $S_z=\mathbf{1}\{z_t\text{ is strictly correct}\}$. We define the SAC-correctness label as $c_{\mathrm{sac}}=1-S_x$, so SAC is correct if and only if the initial proposal is incorrect. The oracle enters rewards and diagnostics only; agent policies receive the task context and interaction messages.

The verifier receives paired intervention rewards
\[
R_V(\mathrm{SAC}\mid m_t,x_t)=c_{\mathrm{sac}},\qquad
R_V(\mathrm{NS}\mid m_t,x_t)=1-c_{\mathrm{sac}}.
\]
On the SAC branch, the generator compares KEEP and REVISE using
\[
R_G(\mathrm{KEEP}\mid m_t,x_t,\mathrm{SAC})=S_x,\qquad
R_G(\mathrm{REVISE}\mid m_t,x_t,\mathrm{SAC})=S_z.
\]

Let $\widetilde J_f(\phi;\theta)=\mathbb{E}_{\tau\sim d^{\phi,\theta}}[R_f(\tau)]$ and $\widetilde J_v(\theta;\phi)=\mathbb{E}_{\tau\sim d^{\phi,\theta}}[R_v(\tau)]$ denote the unregularized expected role rewards. The KL-regularized objectives are
\[
J_v(\theta;\phi)=\widetilde J_v(\theta;\phi)-\beta_v\,\mathbb{E}_{I_v}\!\left[\mathrm{KL}\!\left(\pi_{v,\theta}(\cdot\mid I_v)\,\|\,\pi_{v,\mathrm{ref}}(\cdot\mid I_v)\right)\right],
\]
and
\[
\begin{aligned}
J_f(\phi;\theta)
=
\widetilde J_f(\phi;\theta)
&-\beta_{fx}\,\mathbb{E}_{I_f^x}\!\left[\mathrm{KL}\!\left(\pi^x_{f,\phi}(\cdot\mid I_f^x)\,\|\,\pi^x_{f,\mathrm{ref}}(\cdot\mid I_f^x)\right)\right] \\
&-\beta_{fz}\,\mathbb{E}_{I_f^z}\!\left[\mathrm{KL}\!\left(\pi^z_{f,\phi}(\cdot\mid I_f^z)\,\|\,\pi^z_{f,\mathrm{ref}}(\cdot\mid I_f^z)\right)\right] \\
&-\beta_{fa}\,\mathbb{E}_{I_f^a}\!\left[\mathrm{KL}\!\left(\pi^a_{f,\phi}(\cdot\mid I_f^a)\,\|\,\pi^a_{f,\mathrm{ref}}(\cdot\mid I_f^a)\right)\right].
\end{aligned}
\]
The KL terms stabilize training and are accounted for in the equilibrium analysis.

\subsection{Dual Paired-Action GRPO}
\label{sec:paired-grpo}

The central implementation choice is that the verifier decision and
generator-on-SAC decision are binary and counterfactually scorable. For the
verifier, the action group is $G_V(m_t,x_t)=\{\mathrm{SAC},\mathrm{NS}\}$.
For the generator on the SAC branch, the action group is
$G_G(m_t,x_t,y_t,z_t)=\{\mathrm{KEEP},\mathrm{REVISE}\}$.

For a context $I$ and binary group $G(I)=\{a_1,a_2\}$, we compute rewards
$R(a_1)$ and $R(a_2)$ and form the group-normalized advantage
\[
A(a_i)=
\frac{R(a_i)-\bar R}{\sqrt{\frac{1}{2}\sum_{j=1}^2(R(a_j)-\bar R)^2}+\epsilon_A},
\qquad
\bar R=\frac{1}{2}(R(a_1)+R(a_2)).
\]
With two actions this reduces to a contrastive pressure: the higher-reward
action receives positive advantage and the other receives negative advantage
of equal magnitude.

Each role's update is an on-policy policy-gradient step with a KL anchor
against a frozen reference policy. For role policy $\pi_\psi$ with reference
$\pi_{\psi,\mathrm{ref}}$, the paired-action loss is
\[
\mathcal{L}_{\mathrm{pair}}(\psi)
=
\mathbb{E}\!\left[
-A(a)\,\log\pi_\psi(a\mid I)
+\beta\!\left(\log\pi_\psi(a\mid I)-\log\pi_{\psi,\mathrm{ref}}(a\mid I)\right)
\right].
\]
The verifier update uses $\psi=\theta$ and $G=G_V$; the generator action
update uses $\psi=\phi$ and $G=G_G$. Losses are routed strictly by role: verifier rewards update only
the verifier policy and KEEP/REVISE rewards update only the generator
policy. When multiple revision candidates are sampled for the same decision context, the same paired-action structure applies independently on each sampled decision
unit and advantages are group-normalized per unit.

\subsection{Training Procedure}
\label{sec:training-procedure}

The full DPA-GRPO training loop is given in Algorithm~\ref{alg:training} in Appendix~\ref{sec:algorithm}. At each iteration, the current generator and verifier generate a fresh batch of decision-unit interactions. The batch records
$
(m_t,x_t,y_t,z_t,a_t,o_t,S_x,S_z,c_{\mathrm{sac}})
$
together with paired rewards and old log-probabilities. The batch is used for a REINFORCE-style paired-action policy-gradient update with a KL anchor and is then discarded for training.

The generator and verifier are trained as fixed roles throughout training. The generator policy produces initial proposals and candidate revisions. The verifier policy produces SAC/no-SAC decisions and SAC content. Losses are routed by decision type: verifier paired rewards update the verifier policy, and generator KEEP/REVISE rewards update the generator policy on SAC branches.

\section{Theoretical Guarantees}
\label{sec:main-guarantees}

We next connect the implemented rewards to local equilibrium behavior and convergence.
The main text states the core assumptions, lemmas, and theorem statements; full theorem-specific
technical assumptions and proofs are deferred to Appendix~\ref{app:theory}.

\paragraph{Local Nash equilibrium.}
A pair $(\phi^*,\theta^*)$ is a local Nash equilibrium if there exists a neighborhood
$\mathcal{N}$ such that
$
J_f(\phi^*;\theta^*)\ge J_f(\phi;\theta^*)\quad \forall \phi\in\mathcal{N},
\quad
J_v(\theta^*;\phi^*)\ge J_v(\theta;\phi^*)\quad \forall \theta\in\mathcal{N}.
$
At an interior local Nash equilibrium, the necessary first-order conditions are
$
\nabla_\phi J_f(\phi^*;\theta^*)=0,\quad
\nabla_\theta J_v(\theta^*;\phi^*)=0.
$
These are each player's own gradients; cross-gradients describe coupling but are not Nash
stationarity conditions.

\begin{assumption}[Local non-degeneracy]
\label{ass:local-nondegenerate}
For each binary role decision, whenever a policy assigns nonzero probability to both actions
on a positive-measure set of contexts, there exists a local parameter direction that transfers
an arbitrarily small amount of probability mass from one action to the other on that set.
\end{assumption}

\begin{lemma}[Verifier pointwise best response]
\label{lem:main-verifier-br}
Fix a context $(m_t,x_t)$. Under the verifier rewards
$
R_v(\mathrm{SAC}\mid  m_t,x_t)=c_{\mathrm{sac}},\quad
R_v(\mathrm{NS}\mid  m_t,x_t)=1-c_{\mathrm{sac}},
$
the unique pointwise reward-maximizing verifier action is
\[
y^*( m_t,x_t)=
\begin{cases}
\mathrm{SAC}, & c_{\mathrm{sac}}=1,\\
\mathrm{NS}, & c_{\mathrm{sac}}=0.
\end{cases}
\]
Consequently, if the verifier places nonzero probability on the opposite action on a positive-measure
set of contexts, then there exists a local direction $\Delta\theta$ such that
$
D_\theta \widetilde J_v(\theta;\phi)[\Delta\theta] > 0.
$
\end{lemma}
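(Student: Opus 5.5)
The argument has two parts: a pointwise comparison that identifies the best response, and an application of Assumption~\ref{ass:local-nondegenerate} that turns misplaced probability mass into a strictly positive directional derivative.

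\emph{Pointwise part.} Fix $(m_t,x_t)$. Since $c_{\mathrm{sac}}\in\{0,1\}$, the two paired rewards differ by exactly one:
\[
R_v(\mathrm{SAC}\mid m_t,x_t)-R_v(\mathrm{NS}\mid m_t,x_t)=2c_{\mathrm{sac}}-1\in\{-1,+1\}.
\]
If $c_{\mathrm{sac}}=1$, SAC strictly dominates; if $c_{\mathrm{sac}}=0$, NS strictly dominates. This gives the displayed $y^*$, and uniqueness follows because the gap never vanishes. Any mixed action $\pi(\mathrm{SAC})=p$ earns $p\,c_{\mathrm{sac}}+(1-p)(1-c_{\mathrm{sac}})$, which is affine in $p$. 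It is maximized only at the pure action $y^*$, and it falls short of the maximum by exactly the probability placed on the opposite action.

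\emph{Directional derivative part.} Write the unregularized verifier objective as
\[
\widetilde J_v(\theta;\phi)=\mathbb{E}_{(m_t,x_t)}\Big[\sum_{y}\pi_{v,\theta}(y\mid m_t,x_t)\,R_v(y\mid m_t,x_t)\Big],
\]
where the context distribution is induced by $\phi$ and earlier steps.

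The key structural fact is that the verifier's action at step $t$ does not change the distribution of $(m_t,x_t)$. It may, however, change later states through $o_t$. To handle this cleanly I will treat the per-decision-unit reward as the paper's objective does: rewards are dense and per-unit, so I analyze the term contributed by the intervention decision with the context distribution held fixed. In other words, I differentiate with respect to the verifier's action probabilities at the given contexts only. If full trajectory dependence must be handled, I would instead fix a single step and argue via a performance-difference style decomposition. I expect this to be the main subtlety, and I would state it as the interpretation of $D_\theta\widetilde J_v$ used here.

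Now let $E$ be the positive-measure set of contexts on which $\pi_{v,\theta}(\neg y^*\mid\cdot)>0$. On $E$ both actions have positive probability, because $y^*$ must also keep positive mass: if it had zero mass, the verifier would be deterministic on the wrong action. In that degenerate case the positive-probability hypothesis of Assumption~\ref{ass:local-nondegenerate} fails, so I restrict to the part of $E$ where both actions have positive probability. For softmax-type policies this is all of $E$; otherwise I note the restriction.

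Split $E=E_1\cup E_0$ according to $c_{\mathrm{sac}}$. Since $E$ has positive measure, at least one piece does, say $E_1$. Assumption~\ref{ass:local-nondegenerate} supplies a direction $\Delta\theta$ that transfers mass from NS to SAC on that piece, so that $\frac{d}{d\epsilon}\pi_{v,\theta+\epsilon\Delta\theta}(\mathrm{SAC}\mid\cdot)=\delta(\cdot)>0$ there. I read the assumption as a transfer localized to the set and leaving other contexts unchanged, which I would state explicitly. Differentiating the affine expression then gives
\[
D_\theta\widetilde J_v[\Delta\theta]=\mathbb{E}\big[\mathbf{1}_{E_1}\,\delta\,(2c_{\mathrm{sac}}-1)\big]=\mathbb{E}\big[\mathbf{1}_{E_1}\,\delta\big]>0.
\]
The case of $E_0$ is symmetric, with mass moved toward NS. The main obstacle is whether the non-degeneracy assumption really provides a localized transfer with no side effects elsewhere. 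I rely on reading Assumption~\ref{ass:local-nondegenerate} in exactly that sense.
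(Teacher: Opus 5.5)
Your proof follows the paper's route: the paper gives no separate proof of this lemma and treats it as immediate from the per-unit reward gap $R_v(\mathrm{SAC}\mid m_t,x_t)-R_v(\mathrm{NS}\mid m_t,x_t)=2c_{\mathrm{sac}}-1\in\{\pm1\}$ combined with Assumption~\ref{ass:local-nondegenerate}, which is exactly your pointwise comparison followed by a mass transfer. The caveats you make explicit are readings the paper relies on without stating them: differentiating per decision unit with the context distribution held fixed, a transfer localized to the chosen set, splitting by $c_{\mathrm{sac}}$ so the transfer has one sign, and both actions having positive probability there. Note only that your performance-difference fallback compares return-to-go values rather than immediate rewards, so it would not by itself give the trajectory-level claim; it is the per-unit reading that carries the argument.
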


\begin{lemma}[Generator best response on the SAC branch]
\label{lem:main-generator-br}
On the SAC branch, define
$
\bar S_z(m_t,x_t)=
\mathbb{E}\!\left[
S_z\mid  m_t,x_t,\mathrm{SAC},a_t=\mathrm{REVISE}
\right].
$
KEEP has reward $S_x$, while REVISE has conditional expected reward $\bar S_z( m_t,x_t)$.
Therefore, if $S_x>\bar S_z( m_t,x_t)$, KEEP strictly dominates REVISE; if
$S_x<\bar S_z( m_t,x_t)$, REVISE strictly dominates KEEP. Consequently, whenever
the generator places nonzero probability on the strictly dominated action on a positive-measure
set of SAC contexts, there exists a local direction $\Delta\phi$ such that
$
D_\phi \widetilde J_f(\phi;\theta)[\Delta\phi] > 0.
$
\end{lemma}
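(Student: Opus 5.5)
The argument has two parts: a pointwise comparison of the two actions' expected rewards, then a first-order perturbation of the generator's action distribution on the set where the dominated action gets mass, using Assumption~\ref{ass:local-nondegenerate}. The approach mirrors Lemma~\ref{lem:main-verifier-br}, with one change: the REVISE reward $S_z$ is random given the context, so we work with its conditional mean.

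\textbf{Step 1: pointwise comparison.} Fix a SAC-branch context $(m_t,x_t)$ with $y_t=\mathrm{SAC}$. Since $x_t$ is already drawn, KEEP yields the deterministic reward $S_x\in\{0,1\}$. REVISE yields $S_z$, which depends on $z_t$. By the tower property its conditional expected reward is $\bar S_z(m_t,x_t)$. The expected generator reward at this context, as a function of $p=\pi_G^a(\mathrm{REVISE}\mid\cdot)$, is then
\[
(1-p)\,S_x+p\,\bar S_z(m_t,x_t),
\]
which is affine in $p$ with slope $\bar S_z-S_x$. If $S_x>\bar S_z$, the unique maximizer is $p=0$ (KEEP strictly dominates). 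If $S_x<\bar S_z$, it is $p=1$ (REVISE strictly dominates). This settles the dominance claims.

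One subtlety: $\pi^a_G$ conditions on $z_t$, while $\bar S_z$ integrates over $z_t$. The cleanest route is to apply the argument at the finer context $(m_t,x_t,z_t)$ with reward $S_z$. The stated version, with $\bar S_z$, then holds whenever the REVISE probability is measurable with respect to $(m_t,x_t)$, or by averaging the finer statement.

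\textbf{Step 2: first-order improvement.} Let $B$ be the positive-measure set of SAC contexts on which the generator puts nonzero mass on the dominated action. By Assumption~\ref{ass:local-nondegenerate}, there is a direction $\Delta\phi$ whose infinitesimal effect transfers mass $\delta(I)\ge 0$ from the dominated action to the dominant one, with $\delta>0$ on a positive-measure subset of $B$. We then differentiate $\widetilde J_f$ along $\Delta\phi$, writing it as an expectation over the SAC branch. The context distribution involves the upstream factors $\pi^x_G$, $\pi_V$, and $\pi^z_G$. We choose the direction so that only the $\pi^a$ factor changes; this needs the assumption to supply a direction acting on the action head alone, i.e.\ the action and proposal parameters must be separable. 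Under that choice the derivative is
\[
\mathbb{E}\bigl[\mathbf{1}_B\,\delta(I)\,|\bar S_z-S_x|\bigr].
\]
This is strictly positive because the integrand is nonnegative and strictly positive on a set of positive measure (the gap is nonzero on $B$ by definition of strict dominance). Hence $D_\phi\widetilde J_f(\phi;\theta)[\Delta\phi]>0$.

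\textbf{Main obstacle.} The hard step is Step 2's claim that the perturbation moves only the KEEP/REVISE probabilities, leaving the distribution of $(m_t,x_t,z_t)$ and later decision units unchanged. The generator shares parameters $\phi$ across $\pi^x_G$, $\pi^z_G$, and $\pi^a_G$, and over a multi-step horizon a change in $o_t$ alters future states. I would handle the horizon issue by treating the reward per decision unit: the paired rewards are defined per unit and the objective sums them, so future-state effects enter only through later units. I would then interpret Assumption~\ref{ass:local-nondegenerate} as guaranteeing a direction whose induced changes in the other factors are of lower order, or absent, relative to the transferred mass. If that reading is too strong, the fallback is to state the result for the action-head parameters with the upstream factors held fixed, which is the role-wise notion the paper uses.
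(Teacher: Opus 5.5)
Your argument follows the paper's own route. The paper gives no separate proof: it compares $S_x$ with $\bar S_z$ inside the statement, then invokes Assumption~\ref{ass:local-nondegenerate} in the proof of Theorem~\ref{thm:main-lne}, tacitly as a pure KEEP/REVISE mass transfer. Your Steps 1--2 are sound under that reading if $\pi^a_G$ does not depend on $z_t$. They are also sound if everything is stated at the finer context $(m_t,x_t,z_t)$, where the gap is the deterministic $|S_z-S_x|$.

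\textbf{The averaging claim fails for the REVISE-dominant half.} You say the coarse version with $\bar S_z$ also follows \emph{by averaging the finer statement}. Because $\bar S_z$ conditions on $a_t=\mathrm{REVISE}$, the hypothesis $S_x<\bar S_z$ only says that some correct revisions receive REVISE mass. It says nothing about where the KEEP mass sits. Counterexample: take $S_x=0$, let $z_t$ be correct with probability $1/2$, and let $\pi^a_G(\mathrm{REVISE}\mid m_t,x_t,\mathrm{SAC},z_t)$ be $1$ on correct revisions and $1/2$ on incorrect ones. Then:
\begin{itemize}
\item $\bar S_z=2/3>S_x$, and KEEP has probability $1/4$;
\item all KEEP mass lies where $S_x=S_z=0$, so every KEEP-to-REVISE transfer has zero derivative;
\item the context value $1/2$ is already the maximum attainable with $\pi^x_G,\pi^z_G$ fixed.
\end{itemize}
Averaging does work for $S_x>\bar S_z$, since then REVISE mass must lie on revisions with $S_z=0<S_x$. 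So for the REVISE-dominant half you must keep the hypothesis that the REVISE probability is $(m_t,x_t)$-measurable, or restate the lemma at the finer context. The paper's statement has the same unnoticed defect, since its $\pi^a_G$ conditions on $z_t$.

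\textbf{Your horizon handling is also incorrect as written.} Perturbing $\pi^a_G$ at unit $t$ changes the law of $o_t$, hence of $m_{t+1}$ and of every later proposal, SAC decision and revision. The resulting change in later units' rewards is of the \emph{same} first order as the transferred mass $\delta$, not lower order. Summing per-unit rewards does not make it vanish, and its sign is unknown. Your fallback of holding the upstream factors' parameters fixed does not help either: the inputs to $\pi^x_G$, $\pi_V$ and $\pi^z_G$ at later units still change through $o_{<t'}$. A strict sign needs an explicit modelling assumption, for example:
\begin{itemize}
\item a per-unit, contextual-bandit objective with the context distribution held fixed, or
\item later-unit policies that are insensitive to $o_{<t}$.
\end{itemize}
This is exactly what the paper assumes tacitly by calling the lemma a pointwise best response and reading Assumption~\ref{ass:local-nondegenerate} as a pure mass transfer. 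You should state it as a hypothesis rather than present it as a consequence of the per-unit reward structure.
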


\paragraph{Nash implication.}
Lemmas~\ref{lem:main-verifier-br} and~\ref{lem:main-generator-br} show that the implemented rewards
create pointwise best-response pressure: the verifier is pushed toward raising SAC exactly when the
proposal is wrong, and the generator is pushed toward KEEP or REVISE according to which action has higher
conditional correctness. Thus, positive mass on verifier mistakes or strictly suboptimal generator responses to SAC creates a unilateral reward-improving direction. The following theorem states the corresponding
necessary condition for local Nash equilibria of the KL-regularized game.

\begin{theorem}[KL-regularized paired-action best response]
\label{thm:main-lne}
Suppose $(\phi^*,\theta^*)$ is a local Nash equilibrium of the KL-regularized game. Under
Assumption~\ref{ass:local-nondegenerate} and the KL-dominance condition stated in
Appendix~\ref{app:lne-proof}, the verifier assigns zero probability to pointwise reward-suboptimal
SAC/no-SAC actions on every positive-measure set of contexts, and the generator assigns zero probability to
strictly suboptimal KEEP/REVISE actions on every positive-measure set of SAC contexts.
\end{theorem}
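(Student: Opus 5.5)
The proof goes by contradiction, separately for each role. Since the game is general-sum and the Nash condition is unilateral, the verifier argument holds $\phi^*$ fixed and the generator argument holds $\theta^*$ fixed. Suppose the verifier places probability $p(I_v)>0$ on the pointwise suboptimal action on a positive-measure set $\mathcal{C}$ of contexts $I_v=(m_t,x_t)$. The first step is to use Lemma~\ref{lem:main-verifier-br} together with Assumption~\ref{ass:local-nondegenerate}. These give a local direction $\Delta\theta$ that moves a small amount of mass $\epsilon\,\delta(I_v)$ from the suboptimal action to the optimal one on $\mathcal{C}$. The change in unregularized reward is then
\[
D_\theta \widetilde J_v(\theta^*;\phi^*)[\Delta\theta]=\mathbb{E}\big[\mathbf{1}_{\mathcal{C}}\,\delta(I_v)\,\Delta R_v(I_v)\big]>0,
\]
where $\Delta R_v(I_v)=1$ is the reward gap under the paired rewards $c_{\mathrm{sac}}$ and $1-c_{\mathrm{sac}}$.

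The second step controls the KL term along the same direction. For a binary policy $(p,1-p)$ with reference $(q,1-q)$, the directional derivative of $\mathrm{KL}$ with respect to moving mass is $\log\frac{p}{q}-\log\frac{1-p}{1-q}$. Moving mass from the suboptimal action raises the penalty at rate $\beta_v$ times this log-odds difference. I would therefore state the KL-dominance condition in the following form: on the offending set,
\[
\beta_v\Big|\log\tfrac{1-p}{p}-\log\tfrac{1-q}{q}\Big| < \Delta R_v .
\]
Equivalently, the reference policy is not so concentrated on the wrong action that the anchor outweighs a unit reward gap. This is the soft best-response bound mentioned in the introduction. Under this condition,
\[
D_\theta J_v(\theta^*;\phi^*)[\Delta\theta]=D_\theta\widetilde J_v[\Delta\theta]-\beta_v D_\theta\mathbb{E}[\mathrm{KL}][\Delta\theta]>0 .
\]
A first-order Taylor expansion then gives $J_v(\theta^*+\epsilon\Delta\theta;\phi^*)>J_v(\theta^*;\phi^*)$ for all small $\epsilon>0$. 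Since $\theta^*+\epsilon\Delta\theta$ lies in $\mathcal{N}$, this contradicts local Nash optimality.

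The third step repeats the argument for the generator with Lemma~\ref{lem:main-generator-br}. The context is $I_f^a=(m_t,x_t,\mathrm{SAC},z_t)$ and the gap is $|S_x-\bar S_z|>0$. The perturbation only touches $\pi^a_{f,\phi}$. One subtlety here: the assumption must supply a direction that changes only the KEEP/REVISE head, leaving $\pi^x$ and $\pi^z$ unchanged to first order. Otherwise the $\beta_{fx}$ and $\beta_{fz}$ KL terms and the induced state distribution $d^{\phi,\theta}$ also change. I would read Assumption~\ref{ass:local-nondegenerate} as providing such an action-head-local direction. Under that reading, only the $\beta_{fa}$ term enters the dominance inequality. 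SAC contexts carry weight $\pi_V(\mathrm{SAC}\mid m_t,x_t)$, so the set must be positive-measure under the SAC-branch distribution, which matches the statement.

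The main obstacle is the KL step. Near $p\to0$ the log-odds derivative blows up, so the KL term cannot be dismissed as a small perturbation. Without a dominance condition, a KL-regularized equilibrium generically keeps positive mass on the worse action: the softmax solution $p\propto q\,e^{R/\beta}$ is never exactly zero. The conclusion of exactly zero probability therefore really requires the dominance condition to hold at every $p>0$. In the appendix I would state it as an explicit inequality on the equilibrium log-ratios. If it holds only for $p$ above some threshold, I would weaken the conclusion to the soft bound $p\le q e^{-\Delta R/\beta}/(\cdots)$ and note that it recovers zero mass as $\beta\to0$.
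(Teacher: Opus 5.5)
Your proposal is correct and is essentially the paper's proof: assume positive mass on a suboptimal action, get an unregularized improving direction from Lemma~\ref{lem:main-verifier-br} or Lemma~\ref{lem:main-generator-br} together with Assumption~\ref{ass:local-nondegenerate}, then use the KL-dominance assumption to make the same direction improve $J_v$ or $J_f$, contradicting local Nash optimality. Your explicit log-odds form of the condition and your insistence on a strict inequality (the paper's ``not smaller than'' only gives a nonnegative derivative) sharpen the argument without changing it, though the absolute value is unnecessary because the KL penalty grows only when $\log\frac{1-p}{p}>\log\frac{1-q}{q}$; and your softmax remark is the right diagnosis, since the signed condition holds iff $p>q/\bigl(q+(1-q)e^{\Delta R_v/\beta_v}\bigr)$, so it can never hold for all $p>0$ and fails at a genuine regularized best response.
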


Theorem~\ref{thm:main-lne} shows how KL regularization suppresses lower-reward verifier and generator actions through reward-gap-dependent action odds. A Case~1-favoring result requires an additional intervention or revision cost: successful
SAC-and-revise behavior (Case~5a) can remain reward-consistent unless one adds a cost for relying
on revision.

\paragraph{Convergence of alternating GRPO.}
Let $F(\phi,\theta)=(\nabla_\phi J_f(\phi;\theta),\nabla_\theta J_v(\theta;\phi))$ denote
the game vector field. The alternating GRPO updates can be viewed as a stochastic approximation
to the ODE
$
\dot\phi=\nabla_\phi J_f(\phi;\theta),\quad
\dot\theta=\nabla_\theta J_v(\theta;\phi).
$
Under standard stochastic-approximation conditions, stated in Appendix~\ref{app:convergence-proof},
the interpolated iterates track this ODE.

\begin{theorem}[Stochastic-approximation tracking]
\label{thm:main-convergence}
Under the stochastic-approximation assumptions in Appendix~\ref{app:convergence-proof}, the
interpolated alternating-GRPO iterates form an asymptotic pseudo-trajectory of the game ODE.
Consequently, every isolated asymptotically stable limit point satisfies
$
\nabla_\phi J_f(\phi;\theta)=0,\quad
\nabla_\theta J_v(\theta;\phi)=0,
$
and is therefore stationary for the game vector field. A local Nash interpretation additionally requires role-wise local optimality conditions.
\end{theorem}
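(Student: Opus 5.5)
The plan is to cast the alternating DPA-GRPO updates in the Robbins--Monro form of stochastic approximation and then invoke the Bena\"im asymptotic-pseudo-trajectory machinery. Write $w_k=(\phi_k,\theta_k)$ and express one iteration as
\[
w_{k+1}=w_k+\alpha_k\bigl(F(w_k)+M_{k+1}+b_k\bigr),
\]
where $F$ is the game vector field. Here $M_{k+1}$ collects the zero-mean sampling noise from the fresh on-policy batch of decision units, and $b_k$ is a bias term. That bias comes from two sources: the alternation (the verifier step uses $\phi_{k+1}$ instead of $\phi_k$) and the group normalization with $\epsilon_A>0$.

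The assumptions I would take from the appendix are standard. The step sizes satisfy $\sum_k\alpha_k=\infty$ and $\sum_k\alpha_k^2<\infty$. The map $F$ is locally Lipschitz; this holds because the policies are smooth in their parameters, rewards are bounded in $[0,1]$, and the KL terms are smooth on the interior of the simplex. The iterates satisfy $\sup_k\|w_k\|<\infty$ almost surely. The conditional second moments satisfy $\mathbb{E}[\|M_{k+1}\|^2\mid\mathcal F_k]\le C(1+\|w_k\|^2)$. Finally, $b_k\to0$.

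The key steps run in order as follows.

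First, I verify that the paired-action loss gradient is an unbiased estimate of $-\nabla_\phi J_f$ and $-\nabla_\theta J_v$, up to the normalization bias. For a binary group, the normalized advantage is an affine, positively scaled reward difference, so the REINFORCE identity gives the role gradient times a positive state-dependent scale. I would either absorb that scale into the assumptions (treating it as a bounded positive preconditioner) or assume unnormalized advantages in the analysis. This identification is the main obstacle. The normalization and the clipping at $\epsilon_A$ mean the expected update is only a positively rescaled gradient pointwise in context, not exactly $F$. I would first try to show the update is $F$ plus an $O(\epsilon_A)$ term that vanishes asymptotically; failing that, I would state tracking for the rescaled field, which has the same zeros.

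Second, I bound the alternation bias. Since $\phi_{k+1}-\phi_k=O(\alpha_k)$ and $F$ is Lipschitz, we get $\|b_k\|=O(\alpha_k)\to0$.

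Third, I show the martingale noise condition. Boundedness of the iterates, combined with $\sum\alpha_k^2<\infty$, makes $\sum_k\alpha_kM_{k+1}$ converge almost surely. The martingale convergence theorem then gives the Kushner--Clark condition.

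Fourth, I apply Bena\"im's theorem. Under these conditions, the piecewise-linear interpolation of $w_k$ on the time scale $t_k=\sum_{j<k}\alpha_j$ is an asymptotic pseudo-trajectory of $\dot w=F(w)$. The limit set is therefore internally chain transitive. If $w^*$ is an isolated asymptotically stable limit point, then it is an equilibrium of the flow, because a non-stationary point cannot be asymptotically stable for the flow. Hence $F(w^*)=0$, which gives exactly $\nabla_\phi J_f=0$ and $\nabla_\theta J_v=0$.

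Finally, I would note explicitly that stationarity is only the first-order Nash condition. Upgrading to local Nash requires each role-wise Hessian to be negative semidefinite (respectively definite). This is an extra hypothesis and not a consequence of tracking, consistent with the last sentence of the statement. Theorem~\ref{thm:main-lne} then applies at such points.
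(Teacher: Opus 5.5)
Your skeleton matches the paper's proof sketch. The shared steps are: the Robbins--Monro form with martingale noise plus a bias term; control of the block-coordinate perturbation via Lipschitz gradients and $\phi_{k+1}-\phi_k=O(\alpha_k)$; noise averaging under $\sum\alpha_k^2<\infty$; the asymptotic-pseudo-trajectory/internally-chain-transitive conclusion; stationarity of asymptotically stable points; and the caveat that local Nash needs extra role-wise conditions. You assume $b_k\to0$ where the paper assumes $\sum_t\eta_t(\|b^f_t\|+\|b^v_t\|)<\infty$. This difference is immaterial, since either condition makes the bias contribution to the Kushner--Clark tail sums vanish.

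Your first step goes beyond the paper, and that is where two of your claims fail. The paper does not derive the identification of the paired-action update with $F$. It writes the iterates directly as $\nabla J+M+b$ and lists finite group normalization among the bias sources assumed summable in Assumption~\ref{ass:controlled-bias}. That is your ``absorb it into the assumptions'' option. Your two fallbacks do not work:
\begin{itemize}
\item $\epsilon_A$ is a fixed constant, so an $O(\epsilon_A)$ discrepancy does not vanish along the iterates.
\item A context-dependent positive rescaling $\mathbb{E}_I[c(I)\,g(I;w)]$ generally does not share zeros with $\mathbb{E}_I[g(I;w)]$ when parameters are shared across contexts. Tracking that field would therefore not give $\nabla_\phi J_f=0$ and $\nabla_\theta J_v=0$.
\end{itemize}

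In this paper's setting the normalization is harmless for a different reason. All rewards lie in $\{0,1\}$, so $A(a)=\frac{2}{1+2\epsilon_A}\bigl(R(a)-\bar R\bigr)$ holds exactly, with a constant independent of context. With $a$ sampled on-policy and the action-independent baseline $\bar R$, the expected advantage term is that constant times the corresponding reward gradient, so zeros are unchanged. If the loss were instead summed over both counterfactual actions, the effective scale would depend on the action probabilities, and your worry would become real.

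Note also that the literal anchor $\beta\bigl(\log\pi_\psi(a\mid I)-\log\pi_{\psi,\mathrm{ref}}(a\mid I)\bigr)$, differentiated at $a\sim\pi_\psi$, has zero expected gradient. Unbiasedness for the \emph{regularized} field is therefore something to assume rather than verify, which is why the paper simply posits the decomposition.
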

\providecommand{\PH}{--}
\providecommand{\bestnum}[1]{\textbf{#1}}     
\providecommand{\oursrow}[1]{\textbf{#1}}      

\section{Experiments}
\label{sec:experiments}

\begin{table}[ht]
\centering
\small
\setlength{\tabcolsep}{5pt}
\renewcommand{\arraystretch}{1.08}
\caption{
Headline comparison on TaxCalcBench TY24. We report test-set case-taxonomy diagnostics, training-time accuracy on three common training samples, and test accuracy on the fixed-seed 80/20 split. The case-taxonomy columns follow Section~\ref{sec:taxonomy}: Case~1 + Case~3 measures correct final outputs without harmful revision, Case~4 measures missed errors, and Case~6a measures correct rejection of bad revisions. For generator-only baselines, Case~1 + Case~3 reduces to correct generator outputs, Case~4 reduces to incorrect generator outputs, and Case~6a is not applicable. DPA-GRPO achieves the highest test accuracy at both backbone sizes.
}
\label{tab:main-results}

\resizebox{\linewidth}{!}{%
\begin{tabular}{l c c c c c c c c}
\toprule\toprule
\textbf{Method} & \textbf{Backbone} &
\multicolumn{3}{c}{\textbf{Case taxonomy}} &
\multicolumn{3}{c}{\textbf{Train acc.}} &
\makecell{\textbf{Test}\\\textbf{acc.}$\uparrow$} \\
\cmidrule(lr){3-5}
\cmidrule(lr){6-8}
& &
\makecell{\textbf{Case 1 + Case 3}$\uparrow$} &
\makecell{\textbf{Case 4}$\downarrow$} &
\makecell{\textbf{Case 6a}$\uparrow$} &
\makecell{\textbf{Train-A}$\uparrow$} &
\makecell{\textbf{Train-B}$\uparrow$} &
\makecell{\textbf{Train-C}$\uparrow$} &
\\
\midrule
\multicolumn{9}{l}{\emph{Untrained baselines}} \\
\midrule
Zero-shot & Qwen3-4B & 0.41 + 0 & 0.59 & \PH & 0.53 & 0.32 & 0.26 & 0.41 \\
Zero-shot & Qwen3-8B & 0.53 + 0 & 0.47 & \PH & 0.84 & 0.42 & 0.42 & 0.53 \\
\midrule
\multicolumn{9}{l}{\emph{Trained single-agent generator-only baselines}} \\
\midrule
GRPO & Qwen3-4B & 0.52 + 0 & 0.48 & \PH & 0.74 & 0.47 & 0.32 & 0.52 \\
DAPO & Qwen3-4B & 0.47 + 0 & 0.53 & \PH & 0.74 & 0.37 & 0.21 & 0.47 \\
GSPO & Qwen3-4B & 0.46 + 0& 0.54 & \PH & 0.74 & 0.37 & 0.21 & 0.46 \\
GRPO & Qwen3-8B & 0.54 + 0 & 0.46 & \PH & 0.84 & 0.63 & 0.32 & 0.54 \\
DAPO & Qwen3-8B & 0.55 + 0 & 0.45 & \PH & 0.84 & 0.63 & 0.53 & 0.55 \\
GSPO & Qwen3-8B & 0.52 + 0 & 0.48 & \PH & 0.84 & 0.68 & 0.58 & 0.52 \\
\midrule
\multicolumn{9}{l}{\emph{Ours}} \\
\midrule
\oursrow{DPA-GRPO} & Qwen3-4B & 0.52 + 0.01 & 0.40 & 0.06 & 0.74 & 0.58 & 0.37 & 0.53 \\
\oursrow{DPA-GRPO} & Qwen3-8B & \bestnum{0.43 + 0.13} & \bestnum{0.30} & \bestnum{0.11} & \bestnum{0.90} & \bestnum{0.68} & \bestnum{0.58} & \bestnum{0.57} \\
\bottomrule\bottomrule
\end{tabular}%
}
\end{table}

We evaluate DPA-GRPO on TaxCalcBench TY24~\citep{bock2025taxcalcbench}, a structured tax-form completion benchmark in which each task instance consists of input records, form rules, and a set of evaluated output fields. Our experiments are designed to answer two questions. First, does the proposed generator--verifier training procedure improve held-out accuracy compared with zero-shot and trained single-agent baselines? Second, do the learned policies change the decision-unit dynamics in the way predicted by our game formulation, rather than merely improving a single aggregate score?

\subsection{Experimental Setup}
\label{sec:exp-setup}

\paragraph{Benchmark and evaluation units.}
TaxCalcBench TY24 provides synthetic U.S. federal tax-form instances with ground-truth Form 1040 outputs produced by a deterministic tax engine. We use a fixed-seed 80/20 train/test split. Although each tax return contains many form fields, our evaluation follows the benchmark protocol and scores 19 selected Form 1040 lines per case. A decision unit corresponds to a structured output component, such as a form field or tax-form line, that can be checked against the ground-truth Form 1040 output. We measure accuracy as the fraction of evaluated decision units whose submitted value exactly matches the ground-truth value.

\paragraph{Models and methods.}
We evaluate Qwen3-4B and Qwen3-8B backbones \citep{yang2025qwen3}. The zero-shot baselines use the generator directly without parameter updates and without the verifier loop. The trained single-agent baselines update only the generator using GRPO-style objectives, including GRPO \citep{shao2024deepseekmath}, DAPO \citep{yu2025dapoopensourcellmreinforcement}, and GSPO \citep{zheng2025groupsequencepolicyoptimization}. These baselines produce a single proposal for each decision unit and are evaluated by proposal correctness.

DPA-GRPO trains a generator and a verifier in the two-player loop described in Section~\ref{sec:game}. For each decision unit, the generator first produces an initial proposal. The verifier either raises an SAC or remains silent. If an SAC is raised, the generator may produce a candidate revision and choose whether to keep the original proposal or revise. We report the final submitted output after this interaction. Thus, for baselines, the reported accuracy is generator-only proposal accuracy; for DPA-GRPO, it is post-interaction submitted-output accuracy.

\paragraph{Training and metrics.}
All trained methods use the same train/test split and comparable LoRA fine-tuning settings. DPA-GRPO constructs paired counterfactual action groups for the verifier, $\{\mathrm{SAC},\mathrm{NS}\}$, and for the generator, $\{\mathrm{KEEP},\mathrm{REVISE}\}$ when an SAC is raised. Alongside test accuracy, we report training-time accuracy on three common training samples, denoted \textsc{Train-A}, \textsc{Train-B}, and \textsc{Train-C}, that appear across trained runs. These columns serve as representative diagnostics of the learned behavior during training and complement the test evaluation.

\begin{figure}[t]
  \centering
  \includegraphics[width=\linewidth]{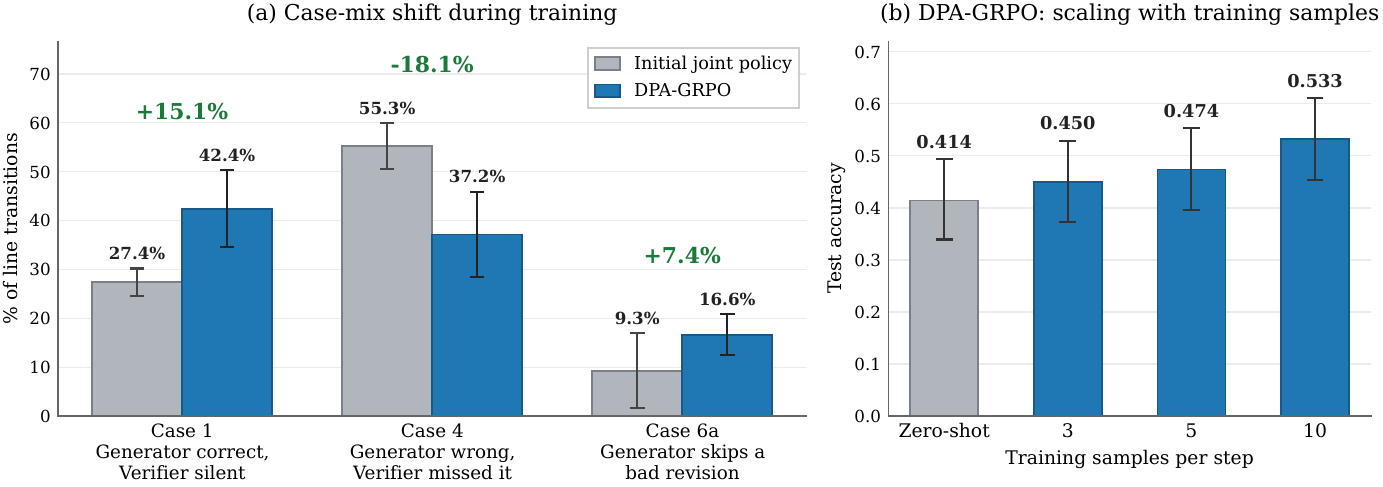}
    \caption{
\textbf{DPA-GRPO improves case-taxonomy dynamics and benefits from more training samples.}
(a) We compare the initial generator-verifier policy with the trained DPA-GRPO policy on three taxonomy categories. DPA-GRPO increases Case~1, corresponding to correct proposals accepted without intervention; reduces Case~4, corresponding to missed errors; and increases Case~6a, where the generator skips a bad revision. Bars report pooled case proportions over a 5-step window, and error bars show $\pm 1$ standard deviation of per-step proportions within that window.
(b) Scaling with training samples. Test accuracy improves with increasing training samples per step, and DPA-GRPO outperforms the zero-shot baseline across all training settings. Error bars show Wilson 95\% confidence intervals for the binomial uncertainty of the evaluated decision-unit outcomes.
}
\label{fig:case-and-scaling}
  \label{fig:case-improvement}
\end{figure}

\subsection{Comparison with Baselines}
\label{sec:exp-baselines}

Table~\ref{tab:main-results} reports the headline comparison. DPA-GRPO achieves the best evaluation accuracy at both backbone sizes. With Qwen3-4B, DPA-GRPO reaches 0.53 test accuracy, improving over zero-shot generation (0.41) and the trained generator-only baselines. With Qwen3-8B, DPA-GRPO reaches 0.57, again outperforming zero-shot generation (0.53) and the trained generator-only baselines. 

The comparison with generator-only RL baselines isolates the contribution of the two-player training structure. All trained baselines update the generator using reinforcement-learning objectives, but they do not train a verifier intervention policy or a generator keep/revise policy. In contrast, DPA-GRPO trains both roles through paired SAC/no-SAC and KEEP/REVISE decisions. The consistent improvement across Qwen3-4B and Qwen3-8B suggests that the generator-verifier interaction provides useful training signals beyond generator-only optimization. Figure~\ref{fig:case-and-scaling}(b) further examines how DPA-GRPO changes with the number of training samples per step. For Qwen3-4B, test accuracy improves as the number of training samples increases, and every trained DPA-GRPO setting outperforms the zero-shot baseline. Error bars report Wilson 95\% confidence intervals for binomial uncertainty over the evaluated decision units in the fixed test split. This suggests that the paired generator--verifier objective benefits from additional task diversity during training.

The case-taxonomy columns provide a test-set diagnostic decomposition of the same results. For generator-only baselines, Case 1 + Case 3 and Case 4 reduce to correct and incorrect generator outputs, respectively, while Case~6a is not applicable because these methods do not use verifier interventions. For DPA-GRPO, the same columns reflect the full generator-verifier interaction, including missed errors and revision-selection behavior. DPA-GRPO reduces the missed-error category: Case~4 drops to 0.40 for Qwen3-4B and 0.30 for Qwen3-8B. It also produces nonzero Case 6a rates, indicating that the generator sometimes rejects candidate revisions that would not improve the output, rather than revising whenever an SAC is raised.

The training-case columns show the same overall pattern. DPA-GRPO obtains the highest reported accuracy across the three common training samples. For Qwen3-8B, DPA-GRPO reaches 0.90, 0.68, and 0.58 on \textsc{Train-A}, \textsc{Train-B}, and \textsc{Train-C}, respectively. Together with the test results, these diagnostics support the main empirical claim: DPA-GRPO improves structured decision accuracy by jointly training the verifier's intervention policy and the generator's response policy, rather than relying solely on generator-only reinforcement learning.

\subsection{Case Taxonomy Analysis}
\label{sec:exp-taxonomy}

Aggregate accuracy alone does not explain how the generator-verifier interaction changes during training. To diagnose the learned behavior, we decompose each transition using the taxonomy from Section~\ref{sec:taxonomy}. Figure~\ref{fig:case-improvement}(a) compares the initial joint policy with the trained DPA-GRPO policy on three diagnostic cases. Bars report pooled case proportions over a 5-step window, and error bars denote $\pm 1$ standard deviation of the per-step proportions within that window.

The first panel shows Case~1, where the generator is correct, and the verifier remains silent. This is the desired terminal behavior. The frequency of Case~1 increases from $27.4\%$ to $42.4\%$, a gain of $+15.1$ percentage points. This indicates that DPA-GRPO does not merely rely on the verifier to repair mistakes after they occur. Instead, training increases the probability that the generator produces correct proposals that the verifier accepts.

The second panel shows Case~4, where the generator is wrong, and the verifier fails to intervene. This is the dominant failure mode of the initial joint policy: an incorrect proposal passes through without correction. DPA-GRPO reduces Case~4 from $55.3\%$ to $37.2\%$, a drop of $-18.1$ percentage points, showing that the verifier becomes less likely to miss incorrect proposals.

The third panel shows Case~6a, where the initial proposal and candidate revision are both incorrect, and the generator chooses KEEP rather than accepting the bad revision. Case~6a increases from $9.3\%$ to $16.6\%$, a gain of $+7.4$ percentage points. This indicates that DPA-GRPO improves the generator's calibration over keep/revise decisions, rather than simply encouraging revision whenever an SAC is raised.
DPA-GRPO improves not only error detection, but also the generator's ability to make calibrated keep/revise decisions.

Taken together, the taxonomy analysis shows that the accuracy gains in Table~\ref{tab:main-results} are supported by meaningful changes in the interaction dynamics. DPA-GRPO increases correct silent acceptance, reduces missed errors, and improves the generator’s calibration over revision decisions. These shifts are aligned with the intended structure of the two-player game: the verifier learns when to challenge, while the generator learns when to accept or reject that challenge.

\section{Conclusion}

We proposed DPA-GRPO, a two-player generator-verifier framework for verifiable assurance reasoning in structured LLM workflows. By representing verifier feedback as safety assurance cases and converting intervention/revision decisions into paired counterfactual actions, DPA-GRPO provides training signals for both agents while preserving interpretability. Our analysis connects the induced case taxonomy to local Nash equilibrium conditions and stochastic-approximation dynamics. Experiments on TaxCalcBench TY24 show that DPA-GRPO improves structured decision accuracy over zero-shot generation and generator-only RL baselines across Qwen3-4B and Qwen3-8B backbones. The decision-unit taxonomy further shows that training increases correct silent acceptance, reduces missed errors, and improves calibrated responses to verifier interventions. These results suggest that structured verifier interventions can serve not only as explanations, but also as actionable training signals for improving both generator and verifier behavior.

{
\small
\nocite{*}
\bibliographystyle{unsrtnat}
\bibliography{ref}
}
\newpage
\appendix 

\section{Related Work}
\label{sec:relatedwork}

\paragraph{RL-based post-training and GRPO.}
RL-based post-training commonly optimizes a KL-regularized objective against a reference policy, often using PPO-style updates or variants \citep{schulman2017proximal,ouyang2022training}. GRPO is a prominent alternative that avoids explicit value-function training by normalizing rewards within groups of samples, which can improve stability and throughput for verifiable reward settings \citep{shao2024deepseekmath}. Our work adopts GRPO as the primary optimization primitive and uses KL regularization to bound policy drift.

\paragraph{Preference optimization and GRPO--DPO connections.}
Direct Preference Optimization (DPO) optimizes policies from preference pairs via an implicit reward view, offering a supervised-like alternative to RL-style updates \citep{rafailov2023direct}. Recent work connects GRPO to contrastive preference optimization and shows that pairwise grouping, or 2-GRPO, can retain key optimization behavior \citep{wu2025takes}. Related reward-matching perspectives further connect group-normalized RL objectives with implicit-reward formulations from preference optimization \citep{wang2025gift}. Our setting naturally induces paired actions for both roles: SAC/no-SAC for the verifier and keep/revise for the generator. This allows us to construct counterfactual dual-action GRPO updates that resemble preference optimization, while still using explicit verifiable rewards rather than human preference labels.

\paragraph{Multi-agent critique, verification, and iterative refinement.}
A broad literature explores self-critique, debate, and multi-agent refinement for improving reliability, often via prompting and heuristic selection \citep{madaan2023self,shinn2023reflexion,du2024improving,khan2024debating,lifshitz2025multi,zhang2024modeling, tang2026agent,jiang2025agentic}. We treat verification as an explicit decision policy (SAC vs.\ no-SAC) trained with verifiable signals, and we train the generator to respond to interventions through revision decisions. 

\paragraph{LLMs as agents in two-player games.}
Recent works have framed LLM reasoning and post-training through two-player or game-theoretic interactions, drawing on classical notions such as Nash equilibrium and stochastic games, \citep{nash1950equilibrium,shapley1953stochastic}. \cite{liu2024largelanguagemodelsagents} interpret LLM training and interaction as a language-based two-player game, where each agent has its own policy, reward, and best-response dynamics. \cite{kempinski2025game} study iterative self-play reasoning in strategic domains and show that naive LLM action selection can be highly exploitable, while best-response-style refinement improves robustness. \cite{cui2026game} formulate information-seeking as a two-player zero-sum extensive-form game and approximate Nash-equilibrium strategies through game-theoretic search. These works support a game-theoretic view of generator--verifier interaction, but our method differs by grounding the game in structured output components, with paired SAC/no-SAC and keep/revise actions trained using verifiable counterfactual rewards.

\section{Limitations and Broader Impact}
\label{sec:limitations}

DPA-GRPO is evaluated on TaxCalcBench TY24, a controlled benchmark with deterministic ground-truth Form 1040 outputs. This setting is useful for studying verifiable structured decisions, but future work should evaluate the method on additional domains with noisier, incomplete, or partially subjective supervision. Our theoretical analysis is local and relies on standard stochastic-approximation and game-theoretic assumptions; it does not imply global convergence or guarantee that finite-step LLM training reaches a Nash equilibrium. Empirically, performance may depend on model capacity, reward design, SAC quality, and the frequency of useful revision opportunities.

The method also incurs additional computational cost relative to generator-only training because it trains both a generator and a verifier and constructs paired counterfactual action groups. While LoRA adapters reduce this cost, scaling to larger models and longer workflows may require more efficient sampling and verifier-scoring strategies. From a broader-impact perspective, SACs can improve auditability by exposing claim-argument-evidence records, but they should not be interpreted as formal guarantees of correctness or safety. In high-stakes settings, outputs should remain subject to expert review and independent validation.
\section{Algorithm}
\label{sec:algorithm}

\begin{algorithm}[H]
\caption{DPA-GRPO training for the generator--verifier game.}
\label{alg:training}
\begin{algorithmic}[1]
\REQUIRE Policies $\pi_{G,\phi},\pi_{V,\theta}$, oracle $S$,
         reference policies $\pi_{G,\mathrm{ref}},\pi_{V,\mathrm{ref}}$,
         KL coefficient $\beta$, step sizes $\eta_k$
\FOR{iteration $k=1,2,\dots$}
    \STATE Sample a batch of task instances and decision units.
    \FOR{each decision context $m_t=(\tau,u_t,o_{<t})$}
        \STATE Sample initial proposal
               $x_t\sim\pi^x_{G,\phi}(\cdot\mid m_t)$ and
               compute $S_x \leftarrow S(x_t;\tau,u_t)$.
        \STATE Set $c_{\mathrm{sac}}\leftarrow 1-S_x$.
        \STATE Sample verifier action
               $y_t\sim\pi_{V,\theta}(\cdot\mid m_t,x_t)$.
        \STATE Construct verifier pair $\{\mathrm{SAC},\mathrm{NS}\}$ and
               compute counterfactual paired rewards
               $R_V(\mathrm{SAC})=c_{\mathrm{sac}},\ R_V(\mathrm{NS})=1-c_{\mathrm{sac}}$.
        \IF{$y_t=\mathrm{SAC}$}
            \STATE Sample candidate revision
                   $z_t\sim\pi^z_{G,\phi}(\cdot\mid m_t,x_t,y_t)$ and
                   compute $S_z \leftarrow S(z_t;\tau,u_t)$.
            \STATE Construct generator pair
                   $\{\mathrm{KEEP},\mathrm{REVISE}\}$ and compute
                   counterfactual paired rewards
                   $R_G(\mathrm{KEEP})=S_x,\ R_G(\mathrm{REVISE})=S_z$.
            \STATE Sample revision action
                   $a_t\sim\pi^a_{G,\phi}(\cdot\mid m_t,x_t,y_t,z_t)$.
            \STATE Set $o_t\leftarrow x_t$ if $a_t=\mathrm{KEEP}$, and
                   $o_t\leftarrow z_t$ if $a_t=\mathrm{REVISE}$.
        \ELSE
            \STATE Set $z_t\leftarrow\varnothing$, $a_t\leftarrow\mathrm{KEEP}$, and
                   $o_t\leftarrow x_t$.
        \ENDIF
        \STATE Store decision-unit transition, paired rewards, and log-probabilities.
    \ENDFOR
    \STATE Update $\theta$ using REINFORCE-style policy gradients on verifier
           SAC/NS paired rewards with $\beta$-KL anchoring to $\pi_{V,\mathrm{ref}}$.
    \STATE Update $\phi$ using REINFORCE-style policy gradients on generator
           proposal and KEEP/REVISE paired rewards with $\beta$-KL anchoring to
           $\pi_{G,\mathrm{ref}}$.
\ENDFOR
\end{algorithmic}
\end{algorithm}

\section{Theory Guarantees}
\label{app:theory}

This section contains theorem-specific technical assumptions and proofs for
Theorems~\ref{thm:main-lne} and~\ref{thm:main-convergence}.

\subsection{Proof of Local Nash Necessary Conditions}
\label{app:lne-proof}

The main text states the local non-degeneracy assumption because it is central to the interpretation
of the case taxonomy. We add here the KL-specific assumption required to transfer reward-improving
directions to the regularized objectives.

\begin{assumption}[KL-dominance for reward-improving deviations]
\label{ass:kl-dominance}
For every local direction identified by Lemma~\ref{lem:main-verifier-br} or
Lemma~\ref{lem:main-generator-br} that strictly improves the corresponding unregularized reward,
the first-order reward gain is not smaller than the first-order increase in the corresponding KL
penalty. Equivalently, the same direction improves the full KL-regularized objective.
\end{assumption}

\begin{proof}[Proof of Theorem~\ref{thm:main-lne}]
Suppose, for contradiction, that $(\phi^*,\theta^*)$ is a local Nash equilibrium but the verifier
assigns nonzero probability to a pointwise reward-suboptimal action on a positive-measure set of
contexts. By Lemma~\ref{lem:main-verifier-br}, there exists a local direction $\Delta\theta$ such that
$D_\theta \widetilde J_v(\theta^*;\phi^*)[\Delta\theta]>0$. By Assumption~\ref{ass:kl-dominance},
the same direction also increases $J_v$, contradicting local Nash optimality of $\theta^*$.

Now suppose the generator assigns nonzero probability to a strictly suboptimal KEEP/REVISE action on a
positive-measure set of SAC contexts. By Lemma~\ref{lem:main-generator-br}, there exists a local direction
$\Delta\phi$ such that $D_\phi \widetilde J_f(\phi^*;\theta^*)[\Delta\phi]>0$. By
Assumption~\ref{ass:kl-dominance}, the same direction also increases $J_f$, contradicting local Nash
optimality of $\phi^*$. Therefore both conclusions must hold.
\end{proof}

\paragraph{Case-taxonomy consequence.}
Theorem~\ref{thm:main-lne} rules out positive equilibrium mass on verifier mistakes and strictly
suboptimal SAC responses. Case~5a can remain admissible because a wrong proposal followed by correct
SAC and successful revision can be reward-consistent for both players. A stronger Case-1-only result
requires an additional assumption, such as an explicit intervention/revision cost or a dominance
condition that producing a correct proposal is strictly preferred to relying on the SAC--revision pathway.

\subsection{Proof of Convergence Statement}
\label{app:convergence-proof}

For the convergence result, we use standard stochastic-approximation assumptions.

\begin{assumption}[Smoothness and compactness]
\label{ass:smoothness}
The parameter domains for $\phi$ and $\theta$ are compact, or the iterates are projected to a compact
set. The objectives $J_f$ and $J_v$ are continuously differentiable and have Lipschitz gradients on this set.
\end{assumption}

\begin{assumption}[Bounded noise]
\label{ass:bounded-noise}
Rewards and GRPO advantages are uniformly bounded. The stochastic gradient noise terms have bounded
second moments and form martingale-difference sequences with respect to the training filtration.
\end{assumption}

\begin{assumption}[Step sizes]
\label{ass:stepsizes}
The learning rates satisfy the Robbins--Monro conditions
\[
\sum_{t=1}^{\infty}\eta_t=\infty,\qquad
\sum_{t=1}^{\infty}\eta_t^2<\infty.
\]
\end{assumption}

\begin{assumption}[Controlled replay and stale-sample bias]
\label{ass:controlled-bias}
Let $b^f_t$ and $b^v_t$ denote bias terms due to replay, stale log-probabilities, and finite group
normalization. We assume
\[
\sum_{t=1}^{\infty}\eta_t\left(\|b^f_t\|+\|b^v_t\|\right)<\infty.
\]
\end{assumption}

\begin{assumption}[Infinite visitation]
\label{ass:visitation}
Each role is updated infinitely often, and each decision branch appearing in the objectives is visited
infinitely often. In particular, the SAC branch has nonzero visitation frequency along the training trajectory.
\end{assumption}

The alternating GRPO iterates can be written as
\[
\phi_{t+1}
=
\phi_t+\eta_t\left(\nabla_\phi J_f(\phi_t;\theta_t)+M^f_{t+1}+b^f_t\right),
\]
\[
\theta_{t+1}
=
\theta_t+\eta_t\left(\nabla_\theta J_v(\theta_t;\phi_{t+1})+M^v_{t+1}+b^v_t\right),
\]
where $M^f_{t+1}$ and $M^v_{t+1}$ are martingale-difference noise terms and $b^f_t,b^v_t$
collect finite-sample, replay, and stale-log-probability bias.

\begin{proof}[Proof sketch of Theorem~\ref{thm:main-convergence}]
By Assumptions~\ref{ass:bounded-noise} and~\ref{ass:stepsizes}, the martingale-difference noise terms
average out asymptotically. Assumption~\ref{ass:controlled-bias} ensures that replay, stale log-probabilities,
and finite group-normalization effects do not accumulate enough to alter the limiting ODE.

The update is block-coordinate: the verifier update uses $\phi_{t+1}$ instead of $\phi_t$. Since
$\phi_{t+1}-\phi_t=O(\eta_t)$ and the gradients are Lipschitz by Assumption~\ref{ass:smoothness}, this
introduces an $O(\eta_t)$ perturbation inside the verifier gradient and hence an $O(\eta_t^2)$ perturbation
in the parameter update, which is summable by Assumption~\ref{ass:stepsizes}. Therefore the interpolated
process tracks the ODE
\[
\dot\phi=\nabla_\phi J_f(\phi;\theta),\qquad
\dot\theta=\nabla_\theta J_v(\theta;\phi).
\]
Standard stochastic-approximation results imply convergence to an internally chain transitive set of this ODE.
Any isolated asymptotically stable limit point of the ODE is stationary, yielding
$\nabla_\phi J_f(\phi;\theta)=0$ and $\nabla_\theta J_v(\theta;\phi)=0$.
\end{proof}

\section{Implementation Details}
\label{app:implementation}

All experiments use the same fixed-seed 80/20 train/test split of TaxCalcBench TY24. We evaluate Qwen3-4B and Qwen3-8B backbones and train all methods using LoRA adapters with AdamW. For DPA-GRPO, the generator and verifier use separate LoRA adapters over the same backbone family.

\paragraph{LoRA configuration.}
We use LoRA rank $r=8$, scaling parameter $\alpha=32$, and dropout $0.05$. Separate generator and verifier adapters are applied to the attention and MLP linear projections:

\paragraph{Optimization.}
We use AdamW with learning rate $10^{-5}$ and weight decay $0.01$. DPA-GRPO uses a REINFORCE-style policy-gradient update with a KL anchor to a frozen reference policy. The KL coefficient is $\beta=0.04$, applied uniformly across role and branch updates.

\paragraph{Sampling and paired-action groups.}
The paired-action group size is $2$. For the verifier, the paired actions are
$
\{\mathrm{SAC},\mathrm{NS}\},
$
and for the generator on SAC branches, the paired actions are
$
\{\mathrm{KEEP},\mathrm{REVISE}\}.
$
We use temperature $0.2$ and top-$p=0.9$ for initial proposal and verifier-action sampling. Candidate revisions are sampled with temperature $0.7$ using best-of-$K$ revision generation with $K=5$ candidates, where candidates are filtered for parse validity and then scored by oracle correctness.

\paragraph{Training schedule.}
We train for up to $30$ steps and evaluate every $5$ steps. 

\paragraph{Hardware.}
Experiments are implemented in Python 3 and run on a Linux machine with an AMD EPYC 7513 32-Core Processor CPU and a single NVIDIA RTX A6000 GPU. We use mixed-precision training with bf16 autocast and gradient checkpointing.

\section{Illustrative line-level interactions}
\label{app:case-interactions}

Each example is one \emph{decision unit}: a single Form~1040 line in one
TaxCalcBench case. \emph{Generator} emits the line JSON; \emph{Verifier}
emits a structured self-assessment with per-line
\texttt{verdict}$\in\{$\texttt{correct}, \dots$\}$; we map that to
\texttt{NO\_SAC} vs.\ \texttt{SAC}. When \texttt{SAC} fires, the Generator
produces a \emph{revision} candidate and the \emph{Approver}
(\texttt{KEEP} / \texttt{REVISE}) chooses which JSON is committed.
Oracle tags $S_x,S_z\in\{0,1\}$ indicate strict match of draft / revision
to gold for that line (training-time shaping).

\paragraph{Case 1.}
\textit{Case} \texttt{single-w2-multiple-1099int-withholding-schedule-b},
\textit{line}~1a (AGI).
\begin{quote}\small\ttfamily\sloppy
\textbf{Generator:} AGI $= 87{,}451$; rationale: wages with no adjustments.\\
\textbf{Verifier:} NO\_SAC; line finding \texttt{correct}; cites W-2 Box~1.\\
\textbf{Revision:} not considered (no SAC).\\
\textbf{Oracle:} $S_x{=}1$ (draft matches gold).
\end{quote}
\textit{Reading:} both roles agree the line is fine; the Verifier does not
intervene. This is the desirable ``Generator correct, Verifier silent''
mass that Table~1 tracks as Case~1.

\paragraph{Case 4.}
\textit{Case} \texttt{single-w2-unemployment-1099g}, \textit{line}~1a (AGI).
\begin{quote}\small\ttfamily\sloppy
\textbf{Generator:} AGI $=131{,}131$; rationale ties the amount to
``prior-year AGI'' from the intake header.\\
\textbf{Verifier:} NO\_SAC; line finding still marked \texttt{correct}
in the JSON (false reassurance).\\
\textbf{Revision:} not considered.\\
\textbf{Oracle:} $S_x{=}0$ (draft does \emph{not} match gold on this line).
\end{quote}
\textit{Reading:} the draft is wrong, but the Verifier never raises SAC, so
no repair loop can run. This is Case~4, the dominant failure mode of the
initial joint policy.
\paragraph{Example 5a.}
\textit{Case} \texttt{single-multiple-w2-excess-social-security-tax-same-ein},
\textit{line}~15 (taxable income).
\begin{quote}\small\ttfamily\sloppy
\textbf{Generator (draft):} amount $146{,}000$ while the text argues
AGI $-$ standard deduction $= 147{,}097$ (numeric inconsistency).\\
\textbf{Verifier:} SAC; line finding \texttt{wrong}; disputes arithmetic.\\
\textbf{Generator (revise):} amount $147{,}097$, aligned with stated AGI and
standard deduction.\\
\textbf{Approver:} REVISE (commit the revised line).\\
\textbf{Oracle:} $S_x{=}0$, $S_z{=}1$; justified SAC.
\end{quote}

\begin{figure}[h]
    \centering
    \includegraphics[width=\linewidth]{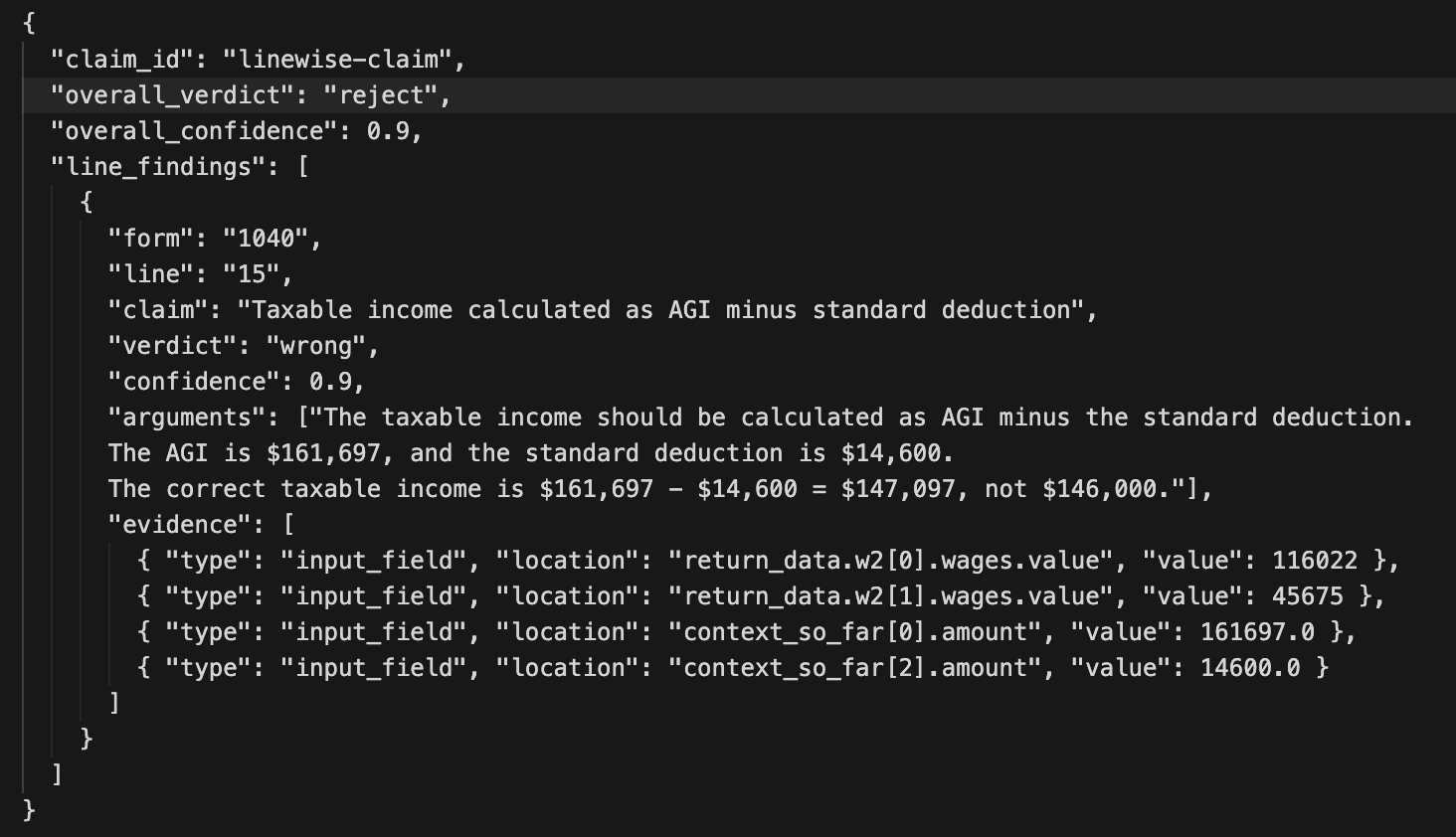}
    \caption*{{\textbf{Case 5a SAC.}
}}
\end{figure}

\textit{Reading:} the Verifier correctly flags a real error; the revised
line strictly improves over the draft ($S_z{>}S_x$ in the usual partial
ordering); the Approver accepts the repair. Taxonomy bucket~5a: wrong
draft, valid critique, revision fixes the line.

\paragraph{Example 6a.}
\textit{Case}
\texttt{single-w2-multiple-1099int-withholding-schedule-b},
\textit{line}~11 (tax liability).
\begin{quote}\small\ttfamily\sloppy
\textbf{Generator (draft):} liability $12{,}222.30$; rationale mixes
bracket steps and contradicts the headline number.\\
\textbf{Verifier:} SAC; line finding \texttt{wrong}; argues bracket /
tax-table mistake.\\
\textbf{Generator (revise):} proposes \emph{different} liability
$11{,}080.34$ (still not gold).\\
\textbf{Approver:} KEEP (retain the original draft, discard revision).\\
\textbf{Oracle:} $S_x{=}S_z{=}0$; justified SAC; approver choice is
counterfactually optimal among $\{$draft, revision$\}$.
\end{quote}

\textit{Reading:} the Verifier is right to be suspicious, but swapping in
the revision would not improve strict correctness; KEEP implements
``skip a bad revision.'' This is Case~6a and is \emph{not} achievable
without a trained Verifier + Approver loop.

\newpage


\newpage

\end{document}